\documentclass{article}

\usepackage{amsmath,amsfonts,bm}

\def\eqref#1{equation~\ref{#1}}

\def\1{\bm{1}}

\DeclareMathAlphabet{\mathsfit}{\encodingdefault}{\sfdefault}{m}{sl}
\SetMathAlphabet{\mathsfit}{bold}{\encodingdefault}{\sfdefault}{bx}{n}

\def\gA{{\mathcal{A}}}

\def\gG{{\mathcal{G}}}

\def\gM{{\mathcal{M}}}

\def\gO{{\mathcal{O}}}

\def\gS{{\mathcal{S}}}

\def\gU{{\mathcal{U}}}

\usepackage{microtype}
\usepackage{graphicx}
\usepackage{subcaption}
\usepackage{xurl}
\usepackage{booktabs} 

\usepackage[colorlinks]{hyperref}

\usepackage[accepted]{icml2026}

\usepackage{amsmath}
\usepackage{amssymb}
\usepackage{mathtools}
\usepackage{amsthm}

\usepackage[capitalize,noabbrev]{cleveref}

\theoremstyle{plain}
\newtheorem{theorem}{Theorem}[section]

\theoremstyle{definition}
\newtheorem{definition}{Definition}
\newtheorem{assumption}{Assumption}
\theoremstyle{remark}

\usepackage{url}
\usepackage{graphicx}
\usepackage{array}
\usepackage[export]{adjustbox}

\usepackage{booktabs}
\usepackage{makecell}
\usepackage{cleveref}
\usepackage{wrapfig}
\usepackage{amssymb}
\usepackage{pifont}
\usepackage[most]{tcolorbox}
\usepackage{csquotes}
\usepackage{amsmath, amssymb, amsthm}
\usepackage{cleveref}

\definecolor{OIblue}{HTML}{0072B2}
\definecolor{OIorange}{HTML}{E69F00}
\definecolor{OIgray}{gray}{0.25}
\tcbset{
  userstyle/.style={
    breakable,
    enhanced jigsaw,
    colback=OIgray!5!white,
    colframe=OIgray,
    fonttitle=\bfseries,
    title=User
  },
  llmstyle_gpt/.style={
    breakable,
    enhanced jigsaw,
    colback=OIorange!5!white,
    colframe=OIorange,
    fonttitle=\bfseries,
    title=ChatGPT-5 Thinking
  },
  llmstyle_gemini/.style={
    breakable,
    enhanced jigsaw,
    colback=OIblue!5!white,
    colframe=OIblue,
    fonttitle=\bfseries,
    title=Gemini 2.5 Pro
  }
}

\usepackage[table]{xcolor}
\usepackage[para,online,flushleft]{threeparttable}
\usepackage{booktabs} 
\usepackage{multirow} 

\definecolor{myblue}{HTML}{1f77b4} 
\hypersetup{urlcolor=myblue, citecolor=myblue, linkcolor=myblue}

\icmltitlerunning{On the Role of Computation in Reinforcement Learning}

\begin{document}

\twocolumn[
  \icmltitle{On the Role of Computation in Reinforcement Learning}
  \icmlsetsymbol{equal}{*}

  \begin{icmlauthorlist}
    \icmlauthor{Raj Ghugare}{princeton}
    \icmlauthor{Micha\l{} Bortkiewicz}{equal,princeton,wut}
    \icmlauthor{Alicja Ziarko}{equal,princeton,uw,ncbr,impas}
    \icmlauthor{Benjamin Eysenbach}{princeton}
  \end{icmlauthorlist}

  \icmlaffiliation{princeton}{Department of Computer Science, Princeton University}
  \icmlaffiliation{wut}{Warsaw University of Technology}
  \icmlaffiliation{uw}{University of Warsaw}
  \icmlaffiliation{ncbr}{Akces NCBR}
  \icmlaffiliation{impas}{Institute of Mathematics of the Polish Academy of Science}

  \icmlcorrespondingauthor{Raj Ghugare}{rg9360@princeton.edu}
  \icmlkeywords{iterative reasoning, reinforcement learning}

  \vskip 0.3in
]

\printAffiliationsAndNotice{\textsuperscript{*}Equal contribution, author order decided via alphabetical order of the last name} 

\begin{abstract}
How does the amount of compute available to a reinforcement learning (RL) policy affect its learning? Can policies using a fixed amount of parameters, still benefit from additional compute? The standard RL framework does not provide a language to answer these questions formally. Empirically, deep RL policies are often parameterized as neural networks with static architectures, conflating the amount of compute and the number of parameters. In this paper, we formalize compute bounded policies and prove that policies which use more compute can solve problems and generalize to longer-horizon tasks that are outside the scope of policies with less compute. Building on prior work in algorithmic learning and model-free planning, we propose a minimal architecture that can use a variable amount of compute. Our experiments complement our theory. On a set 31 different tasks spanning online and offline RL, we show that $(1)$ this architecture achieves stronger performance simply by using more compute, and $(2)$ stronger generalization on longer-horizon test tasks compared to standard feedforward networks or deep residual network using up to 5 times more parameters.\footnote{The code for our experiments is available on this \href{https://rajghugare19.github.io/computation-rl/index.html}{website}.}
\end{abstract}

\section{Introduction}
The standard view in reinforcement learning~(RL) is to treat RL policies as static functions that map states to actions. Policies typically spend a fixed amount of compute, regardless of the complexity of making the correct decision in the underlying state. For example, for a humanoid robot, the action of where to move the right foot is easy, but figuring out how to efficiently pack furniture in a truck requires more~deliberation. 

In RL, this limitation is often solved by citing Moore's law~\citep{Moore_65} and simply scaling the number of policy parameters. But the compute required to make good decisions in different situations can span many orders of magnitude; using a static massive policy with a large number of parameters is wasteful. Additionally, as compute becomes cheap, the complexity of the world and sensory data also increase~\citep{javed2024the}.

\textit{In this work we advocate for a computational view of RL, where computation time and parameter count~(memory) are distinct axes. The first primary contribution of our paper is to take a step towards formalizing this view.} We formulate RL policies as bounded models of computation and prove that there exist MDPs where policies with more compute time are arbitrarily better than policies with less compute.
Intuitively, policies with more compute time should learn more generalizing solutions, where static or computationally limited policies might instead learn heuristics which overfit their training~\citep{zhou2023algorithmstransformerslearnstudy, zhou2024transformersachievelengthgeneralization}. In this vein, we prove that there exist MDPs where policies with less compute can overfit on the training tasks and fail to generalize, whereas policies with more compute can generalize to longer-horizon tasks during evaluation~\citep{myers2025horizon}.

\textit{The second primary contribution of our work is to empirically show that RL policies which use more compute can achieve stronger performance as well as stronger generalization to longer-horizon unseen tasks}. Indeed, prior empirical work has also recognized the importance of using more computation to improve the performance of RL agents. Model-Based RL attempts this by learning a transition model and performing search using a planning algorithm~\citep{dyna_1991}. Prior work in model-free planning uses various inductive biases like tree-search~\citep{treeQN} or latent space planning~\citep{oh2017valuepredictionnetwork, silver2017predictronendtoendlearningplanning} to train policies which generate solutions iteratively. Notably~\citet{guez2018learningsearchmctsnets, vin} have shown that using iterative architectures for RL policies with purely model-free RL objectives can display planning-like behavior~\citep{bush2025interpretingemergentplanningmodelfree}. One limitation of these works is that the architectures proposed are restricted to tasks with a complete top down view. In our work, we propose a minimal recurrent architecture that can be trained on any task, to use a variable amount of compute with the same number of parameters~(\Cref{fig:arch}). On over $30$ tasks, we show that these recurrent policies improve significantly when using more recurrent steps. When trained to reach only nearby goals, we show that these policies outperform residual networks with $5$ times more parameters when evaluated on unseen far-away goals. We also propose a way to estimate a numerical measure of the \emph{value of compute}, analogous to the value of information studied in POMDPs. 

Our work is reminiscent of recent work on length generalization in language models~\citep{zhou2023algorithmstransformerslearnstudy, zhou2024transformersachievelengthgeneralization} that make use of iterative computation to promote this sort of generalization. Our experiments complement these results by empirically demonstrating that iterative computation can unlock horizon generalization in RL, and can do so without a special training dataset or curriculum~\citep{Cho2024PositionCI, lee2025selfimprovingtransformersovercomeeasytohard}. While prior work often attributes the success of iterative reasoning to the tasks that a model has seen before~\cite{hanna2025modelfreereinforcementlearningthinking, lester2021powerscaleparameterefficientprompt}, both our experiments and theory highlight that increasing compute (including via iterative function application) can improve performance on RL tasks in settings without language, pre-training, or multi-task training.

\section{Related Work}

\paragraph{Model-based RL and planning.} There has been significant prior work in model-based RL on using additional compute to improve RL policies using planning~\citep{Schrittwieser_2020, hafner2019learninglatentdynamicsplanning, janner2021trustmodelmodelbasedpolicy, hansen2022temporaldifferencelearningmodel, heess2015learningcontinuouscontrolpolicies}. The model of the world~\citep{dyna_1991} is typically learned independent of the primary learning process. Previous works have proposed model-learning objectives that support the primary RL objective~\citep{pmlr-v54-farahmand17a, grimm2020valueequivalenceprinciplemodelbased, voelcker2023valuegradientweightedmodelbased} or learning policies and models using a single objective~\citep{eysenbach2023mismatchedmorejointmodelpolicy, ghugare2023simplifying}. Our work is more closely related to approaches that propose architectures that can learn to plan, without explicitly learning a model of the world~\citep{vin, treeQN, drc, oh2017valuepredictionnetwork, guez2018learningsearchmctsnets}. \Citet{pmlr-v97-guez19a} show that planning-like computations can be learned using a convolutional LSTM architecture~\citep{shi2015convolutionallstmnetworkmachine} and a purely model-free RL objective.

\paragraph{Recurrent architectures for learning algorithmic solutions.} In the supervised learning literature, prior works have shown that recurrent networks~\citep{rnns, lstm, graves2014neuralturingmachines, he2015deepresiduallearningimage} trained on outputs of an algorithm can learn solutions that extrapolate to much longer inputs~\citep{graves2017adaptivecomputationtimerecurrent, hrms, trms, geiping2025scaling, schwarzschild2021learnalgorithmgeneralizingeasy, NEURIPS2022_7f70331d}. In our experiments we also show that such an architecture learns policies that can solve more difficult tasks during evaluation. However, unlike the longer length evaluation in supervised learning, the horizon of evaluation tasks is longer in our RL setup. The architecture we use falls in the family of hybrid recurrent networks in which certain layers are repeated multiple times during the forward pass~\citep{963769}.  Let $n_i, n_o$ be the input network and the output head respectively and $r$ be an intermediate network. Then the forward pass of a canonical recurrent architecture is of the form $n_o(r^k(n_i(x)))$. Many variants of such architectures have been proposed, mostly in the supervised learning literature, where $r$ could be a recurrent network~\citep{963769, geiping2025scaling, dehghani2019universaltransformers, yang2024loopedtransformersbetterlearning, pmlr-v97-guez19a}, a residual network~\citep{NEURIPS2022_7f70331d, schwarzschild2021learnalgorithmgeneralizingeasy}, or a diffusion model~\citep{du2024learningiterativereasoningenergy}. In~\Cref{sec:method}, we propose a new minimal variant of a recurrent network that worked well in our experiments. Recently, \citet{trms, hrms} have proposed multiple techniques like deep supervision, truncated gradients and early stopping~\citep{graves2017adaptivecomputationtimerecurrent} for training recurrent architectures in supervised learning tasks. We perform ablations~(\Cref{sec:ablations}) to determine the benefits of these techniques for our RL experiments.

\paragraph{Chain of thought reasoning in LLMs.} The performance of pre-trained language models can be improved significantly using inference time strategies without changing the parameters of the model~\citep{gpt, wei2022chain, wei2022emergentabilitieslargelanguage}. Such strategies include prompting the model to ``think step by step''~\cite{wei2022chain} or using self verification combined with a search algorithm to iteratively improve its output~\cite{yao2023treethoughtsdeliberateproblem, zhou2024languageagenttreesearch, weng2023largelanguagemodelsbetter, muennighoff2025s1simpletesttimescaling}. Our paper also shows that policies using more compute and the same number of parameters can significantly improve performance. But unlike pretrained LLMs, our work is focused on training RL policies from scratch.

\section{Preliminaries}
\label{sec:preliminaries}
\paragraph{Goal-Conditioned Reinforcement Learning.} The agent operates in a goal conditioned MDP $\gM$~\citep{Kaelbling1993LearningTA}, where $s_t\in \gS$ are states, $a_t \in \gA$ are actions and $g\in \gG$ are goals. Before every episode, a start state and a goal is sampled from a task distribution $p(s_0, g)$. At every timestep, the agent samples an action using a goal-conditioned policy $\pi(a \mid s,g)$ and the next state is sampled using the transition dynamics $p(s_{t+1} \mid s_t, a_t)$. At each timestep the agent receives a reward $r_g(s,a)$ from the MDP. The objective of a GCRL algorithm is to maximize the discounted sum of rewards:
\begin{equation}
    J(\pi) = \mathbb{E}_{\substack{p(s_0,g), \pi(a_t, s_t, g), \\ p(s_{t+1} \mid s_t, a_t )}} \left[ \sum_{t=0}^{H-1} \gamma^t r_g(s_t,a_t) \right]
\end{equation}
Note that the task distribution could be different during evaluation  $p_{\text{train}}(s_0, g) \ne  p_{\text{test}}(s_0, g)$. The value function of a policy $v_{\pi}(s)$ is defined as $\mathbb{E}_{\pi, p} \left[ \sum_{t=0}^{H-1} \gamma^t r(s_t) \mid s_0 = s\right]$. When the reward is sparse~(of the form $r_{+}$ upon reaching the goal and $r_{-}$ elsewhere) and the transitions are deterministic, we will use the term horizon of a task $(s,g)$ to mean the minimum number of timesteps that are needed to reach the goal $g$ from state $s$. Lastly, we also use deterministic policies which are special policies which output an action instead of a distribution, i.e.,~$a = \pi(s,g)$.

\paragraph{Computational theory.} For our theoretical results, we will represent an RL policy using a Turing machine. A Turing machine $M$ is defined by its set of states, input and tape alphabets, a transition function, and an accept and reject state~\citep{sipser}. In our work, the input alphabet is always $\{0,1\}$ and the input is a binary string $s$. We denote the length of this string as $|s|$. We use $w^K$ to denote the string that repeats $w$, $k$ times. $w^*$ denotes the set $\{w^k \; \text{for} \;k \in \mathbb{N}\}$, and $\epsilon$ to denote the empty string. We only focus on single-tape deciding machines that take an input string $s\in\{0,1\}^*$ and always transition to either the accept ($M(s)=1$) or reject state ($M(s)=0$) and only use a single-tape for their computation. We also use $\gO(\cdot)$ and $o(\cdot)$ for the big-o and little-o notation respectively. Lastly, a function  $t : \mathbb{N} \rightarrow \mathbb{N}$, where $t(n)$ is at least $\gO(n \log n)$, is called
time-constructible if the function that maps the string $1^n$ to the binary representation of $t(n)$ is computable in time $O(t(n))$~\citep{sipser}. Time-constructibility is a technical property of time-functions, that is required for the proofs, we do not refer to it in our experiments. Please refer~\citet{sipser} for additional details on computation theory, all of our proofs use the same terminology.

\section{On the complexity of behaving optimally}
In this section, we introduce a framework for thinking about the compute used by deterministic policies in the context of complexity theory~\citep{sipser}. For all the theoretical results, we assume that the MDP is deterministic and the action space is binary, noting that all discrete action MDPs can be reduced to another MDP with binary action spaces.
We first introduce two definitions. 

\label{sec:theory}
\begin{definition}[Time bounded Turing machines]~\citep{Hartmanis1965OnTC}
Let $M$ be a Turing machine that halts on all inputs $s \in \{0,1\}^*$ and $t : \mathbb{N} \rightarrow \mathbb{N}$ be function. We say that $M$ is $t$-bounded if it decides all inputs $s$ in at most $t(|s|)$ steps. 
\end{definition}

\begin{definition}[Time bounded policies and policy classes]
A deterministic policy $\pi(s)$ is said to be $t$-bounded if there exists a $t$-bounded Turing machine such that $\pi(s) = 1$ if $M(s)$ accepts and $\pi(s) = 0$ if $M(s)$ rejects, for all  $s \in \{0,1\}^*$. Here $s$ is the binary encoding of the state. We define a policy class $\Pi_{t}$, as the set of all deterministic policies that are $t$-bounded.
\end{definition} 

\begin{assumption}
\label{ass:bounded-policy}
For all policies we consider $\pi \in \Pi_{t}$, the description length of its corresponding Turing machine under a canonical representation scheme is always bounded by a constant, i.e., $|\langle M \rangle| < K_{max} \in \mathbb{N}$. $K_{max}$ can be as large as possible, but our theoretical results do require it to be no smaller than the length of a particular universal Turing machine.
\end{assumption} 

In the real world this assumption is always true, as any feasible set of policies must rely on a finite amount of memory to store its parameters and code. Our main theoretical result is that policy classes with greater computational budgets are capable of solving a broader range of problems:

\begin{theorem}[Policy Hierarchy Theorem]
\label{thm:policy-hierarchy}
Let $g(n)$ and $t(n)$ be time-constructible functions such that $ g(n) \in o(t(n) / \log t(n))$. Then there exists an MDP and a function $f(n) \in \gO(t(n))$ such that:
\begin{enumerate}
    \item The optimal policy $\pi^*$ belongs to the class $\Pi_{f}$. 
    \item For any policy $\pi \in \Pi_{g}$, $J(\pi)$ is arbitrarily lower than $J(\pi^*)$.
\end{enumerate}
\end{theorem}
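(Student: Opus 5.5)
We reduce the statement to the classical deterministic time hierarchy theorem \citep{Hartmanis1965OnTC,sipser}. Since $t$ is time-constructible and $g(n)\in o(t(n)/\log t(n))$, there is a language $L\subseteq\{0,1\}^*$ decidable by a single-tape machine $M^*$ in time $f(n)\in\gO(t(n))$, but not decidable by any machine in time $g(n)$. In fact, the diagonalization gives more. $L$ is built by simulating every machine description $\langle M\rangle$ on inputs of the form $\langle M\rangle 10^k$ and flipping its answer. Hence every $g$-bounded machine $M$ disagrees with $L$ on \emph{every} sufficiently long padded input $\langle M\rangle10^k$, not merely on some single input. We use \Cref{ass:bounded-policy} here: every policy in $\Pi_g$ has $|\langle M\rangle|<K_{max}$, so only finitely many machines need to be defeated. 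Taking $K_{max}$ at least the size of the universal simulator used in the construction guarantees that $M^*$ itself is admissible.

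\textbf{Constructing the MDP.} We encode membership in $L$ as the only way to earn reward. Fix a target string $w\in\{0,1\}^*$. Let the states be binary strings, with start distribution $p(s_0)$ supported on the states $x$, and binary actions. From state $x$, action $a$ moves deterministically to a terminal state. The reward is $r_{+}=1$ if $a=\1[x\in L]$ and $r_{-}=0$ otherwise (equivalently, $-c$ for an arbitrary scale $c$). The optimal policy is then exactly $\pi^*(x)=\1[x\in L]$. It is computed by $M^*$ in time $f(|x|)\in\gO(t(|x|))$, so $\pi^*\in\Pi_f$, which proves item 1, and $J(\pi^*)=1$.

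\textbf{The gap in item 2.} Choose $p(s_0)$ uniform over a set $S_N$ of $N$ inputs. For each of the finitely many descriptions $\langle M\rangle$ with $|\langle M\rangle|<K_{max}$, $S_N$ contains padded strings $\langle M\rangle10^k$ with $k$ large. Every $g$-bounded $M$ errs on all of its own padded strings once $k$ is large. This alone only gives an error fraction of order $1/(\#\text{descriptions})$, which is not arbitrarily large. To make the gap arbitrary, we instead build a sequence of MDPs $\gM_m$, or equivalently a horizon-$H$ chain. In the chain, the agent must answer correctly at each of $H$ successive states $\langle M_j\rangle10^{k}$ for an enumeration of all admissible machines, and any wrong answer sends it to an absorbing zero-reward state. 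Then $\pi^*$ collects reward $\sum_{t<H}\gamma^t$. Every $\pi\in\Pi_g$ fails at the first state naming a machine equivalent to itself, and it does so within a bounded number of steps, so its return stays bounded while $J(\pi^*)$ grows with $H$ (or with the reward scale). The ratio or difference is therefore arbitrarily large. A simpler alternative is to scale the rewards: with $r_{+}=R$ and $r_-=-R$, the single-step gap is at least $2R/N$, which is unbounded in $R$. We would present this as the clean version.

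\textbf{Main obstacle.} The hardest step is making the classical statement, which only says that \emph{some} input is misclassified, quantitative and uniform over $\Pi_g$. The standard proof errs only on inputs that encode the machine, and only for large enough padding, where ``large enough'' depends on that machine's constant factors. We therefore need the finiteness from \Cref{ass:bounded-policy} to pick one padding length that works for all admissible machines simultaneously. We also must check that $M^*$, which uses a universal simulator with $\log t$ overhead, still fits within $K_{max}$. The remaining steps are routine bookkeeping.
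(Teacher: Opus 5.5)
Your clean version is essentially the paper's proof: a one-step MDP whose reward checks the action against membership in the hierarchy-theorem language $L$, a start distribution that puts mass on every admissible description $\langle M\rangle$ padded past a single threshold made uniform by Assumption~\ref{ass:bounded-policy}, and a gap of order $R$ divided by the number of descriptions, made arbitrary by scaling $R$ (the paper uses rewards $0$ and $-R$, and draws the padding from an arbitrary distribution over lengths beyond $n_{max}$ rather than a finite set $S_N$). The chain detour is unnecessary, and for fixed $\gamma<1$ it would not by itself give an unbounded gap, since $\sum_{t<H}\gamma^t<1/(1-\gamma)$; falling back on reward scaling is the right call.
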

This theorem tells us that there exist tasks on which policy classes that have more compute perform arbitrarily better than policies with less compute.
See Appendix~\ref{app:proofs} for the proof, which uses the time hierarchy theorems~\citep{sipser} and~\Cref{ass:bounded-policy} to construct the desired MDP. 
Our experiments will empirically show that policies with additional compute can achieve higher returns in practice (\Cref{sec:experiments}). This result is interesting because it suggests that, under computational constraints, standard results about MDPs may no longer hold. For example, computation constraints may be reflected as partial observability, potentially explaining why prior experimental work has used non-Markov policies for solving ``fully observed'' tasks~\citep{kapturowski2018recurrent, openai2019solvingrubikscuberobot, petrenko2023dexpbtscalingdexterousmanipulation}.
While recent work has argued that additional computation (``thinking'' or ``reasoning'') is primarily useful because it enables policies to leverage multi-task pre-training~\citep{hanna2025can}, \Cref{thm:policy-hierarchy} shows that the value of additional computation does not depend on multitask learning nor on pre-training.

Importantly, additional computation need not translate to larger hypotheses classes with weaker generalization. Rather, policies that use additional compute can provably exhibit stronger generalization. The intuition, which we formalize below, is that a certain amount of compute capacity is required to represent the correct algorithm, and compute-constrained models will instead learn heuristics.
\begin{theorem}[Long Horizon Generalization]
\label{thm:long-horizon}
Let $g(n)$ and $t(n)$ be time-constructible functions such that $ g(n) \in o(t(n)/\log t(n))$. Then there exists a goal conditioned MDP and a function $f(n) \in \gO(t(n))$ such that:
\begin{enumerate}
    \item The optimal goal conditioned policy $\pi^*$ belongs to the class $\Pi_{f}$.
    \item For all training tasks $(s_{\text{train}},g_{\text{train}} \sim p_{\text{train}}(s, g))$,  the best $g$-bounded policy ${\pi}^*_g \in \Pi_{g}$ has the same value function as $\pi^*$. But for infinitely many longer-horizon test tasks, ${\pi}^*_g$ is arbitrarily worse than $\pi^*$. 
\end{enumerate}
\end{theorem}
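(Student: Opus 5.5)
The plan is to reuse the construction behind \Cref{thm:policy-hierarchy} and attach a goal/horizon structure to it, so that the hard language only matters on long-horizon tasks.

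\textbf{Hard language.} By the deterministic time hierarchy theorem, since $g(n)\in o(t(n)/\log t(n))$ and both are time-constructible, there is a language $L$ decidable by some machine $M_L$ in time $f(n)\in\gO(t(n))$ that no $\gO(g(n))$-time machine decides. I will use the standard strengthening obtained from the diagonalization proof: every $g$-bounded machine of description length below $K_{max}$ (\Cref{ass:bounded-policy}) errs on \emph{infinitely many} inputs. This holds because the diagonalizer simulates each such machine $\langle M\rangle 10^k$ for every padding $k$, so each candidate is defeated on infinitely many padded inputs. Moreover, since there are only finitely many machines with $|\langle M\rangle|<K_{max}$, for every finite set of lengths some single $g$-bounded machine is correct on all short inputs.

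\textbf{MDP construction.} Let states be pairs $(x,i)$ together with absorbing success and failure states. The goal $g$ encodes a horizon $h$, and the state encodes the target string $x$. In state $(x,i)$ the binary action must equal $\1[x\in L]$. A correct action advances to $(x,i+1)$, a wrong action leads to the absorbing failure state, and reaching $i=h$ gives reward $r_+$ (with $r_-$ elsewhere).

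To tie the horizon to the input length, restrict tasks so that $x$ has length $n$ determined by $h$. Pad $x$ so that its length is proportional to the horizon; then "longer horizon" coincides with "longer input," and the failure inputs for $g$-bounded machines are available at arbitrarily large horizons. The training distribution $p_{\text{train}}$ is supported on horizons $h\le H_0$, hence on inputs of length $\le n_0$. The test tasks use all horizons.

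\textbf{Verification.}
\begin{enumerate}
\item $\pi^*$ simply runs $M_L$ on the encoded $x$, ignoring the counter. Parsing costs linear time, so $\pi^*\in\Pi_{f'}$ with $f'\in\gO(t(n))$. This policy is optimal because it never enters the failure state.
\item On training tasks, a $g$-bounded policy can hardcode a lookup table of $L$ restricted to lengths $\le n_0$. This is within the description bound provided $K_{max}$ is large enough relative to $n_0$, or we choose $n_0$ small given $K_{max}$. It answers in $\gO(n)\subseteq \gO(g)$ time. Hence the best $\pi^*_g$, defined as the policy maximizing training return, matches $v_{\pi^*}$ on every training task.
\item Every $g$-bounded policy, in particular $\pi^*_g$, errs on infinitely many $x$. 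On the corresponding test tasks it fails at the first step and receives only $r_-$ rewards, while $\pi^*$ collects $\gamma^{h}r_+$ plus the step rewards. Scaling $r_+-r_-$ or choosing $\gamma=1$ with finite $H$ makes the gap arbitrarily large.
\end{enumerate}

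\textbf{Main obstacle.} The delicate step is the interaction between "best on training" and "fails on infinitely many longer-horizon tasks." I must ensure that the optimizer over $\Pi_g$ cannot also be correct at large lengths, which the infinitely-often hierarchy handles. I must also ensure that training-optimal $g$-bounded policies exist at all, which requires the finite lookup table to fit within $K_{max}$ and the time bound $g$. I would handle this by choosing the training length cutoff $n_0$ as a function of $K_{max}$ and the universal machine length, and by making $g(n)\ge n$ implicitly via time-constructibility ($g$ is at least $n\log n$), so that reading the input is affordable.
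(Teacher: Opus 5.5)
Your overall plan matches the paper's. You reuse the diagonal language from the Policy Hierarchy Theorem, tie the horizon to the padding length, let a finite lookup policy be optimal on short training tasks, and let diagonalization defeat $g$-bounded policies on long ones. Choosing $n_0$ small relative to $K_{max}$, so that the lookup policy actually lies in $\Pi_g$, is a point the paper skips. Your first justification for it, via finiteness of the machine set, is backwards, though: finiteness implies that for a large enough cutoff \emph{no} machine of bounded description is correct on all short inputs.

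The genuine gap is in step 3. The infinitely-often hierarchy is about machines whose input is $x$. Your policy's input is $z=\mathrm{enc}((x,i),h)$, and membership in $\Pi_g$ means halting within $g(|z|)$ steps. Since $h\propto|x|$ is written in binary, $|z|\ge |x|+\log_2|x|-O(1)$. A policy correct at step $0$ on almost all $x$ therefore only yields a decider for $L$ running in roughly $g(n+\Theta(\log n))$ time. The hypothesis $g\in o(t/\log t)$ does not imply $g(n+\Theta(\log n))\in o(t(n)/\log t(n))$. Concretely, $g(n)=2^{n^2}$ and $t(n)=2^{(n+1)^2}$ satisfy the hypotheses, yet $g(|z|)\ge 2^{n^2+2n\log_2 n-O(n)}\gg t(n)$. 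So a $g$-bounded policy can strip off $(i,h)$ and run $M_L$ on $x$ for all large inputs, combined with your lookup table on the training inputs. That policy is training-optimal and errs on only finitely many tasks, so your MDP does not witness the theorem.

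The missing idea is that the policy's \emph{entire} input at the start of a long test task must be exactly a string on which the diagonalizer defeats it. The paper achieves this in three moves:
\begin{itemize}
\item the state is the padded string $\langle M\rangle 10^k$ itself;
\item the goal is fixed to $\epsilon$;
\item the horizon is realized by deleting one padding symbol per correct action.
\end{itemize}
No counter or horizon is ever fed to the policy, so $\pi$ fails already at the start state $\langle M_\pi\rangle 10^k$ for all large $k$.

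Alternatively, you could keep your counter and goal but diagonalize over the full encoded input. Require the action $D(z)$, where $D$ simulates $M$ on all of $z$ for $t(|z|)/\log t(|z|)$ steps and flips the answer. With either repair, your quantification over \emph{every} $g$-bounded policy goes through. That is a stronger form of the claim than the paper's, which only considers the single lookup policy.
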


The proof can be found in Appendix~\ref{app:proofs}.
This theorem implies that a policy class with less compute can overfit on the training tasks of a more difficult problem and fail to generalize to longer-horizon tasks during evaluation. A simple example of this is language models being unable to solve GSM8K~\citep{cobbe2021trainingverifierssolvemath} problems in a single forward pass~(constant compute), but solving them when provided with more compute using chain of thought~\citep{wei2022chain}.
In our experiments~(\Cref{sec:horizon_gen}), we show that RL policies using more compute generalize to longer-horizon tasks during evaluation.

\section{A Minimal recurrent architecture for studying the benefits of computation}
\label{sec:method}

The primary aim of this paper is to theoretically characterize when and how recurrent computation can aid reinforcement learning~(\Cref{sec:theory}). To complement our theory, we propose a simple recurrent architecture that we will use for representing value functions and policies. This architecture is heavily inspired by components of the LSTM~\citep{lstm} and GRU~\citep{gru} architectures, though we have stripped away some components in attempts to make it as simple as possible while recapitulating the experimental phenomenon. This section will describe the architecture, and~\Cref{sec:experiments} will apply it in experiments.

\paragraph{A Recurrent Block.}

The key component of the architecture is a recurrent block which takes as input $x$ and the current hidden state $c_i$ and outputs the next hidden state, $c_{i+1}$. We will recursively apply this block $N$ times, so we can think about this computation as mapping an input $x$ to an output $c_n$. We initialize $c_0 = \vec{0}$. The recurrent block is composed of two linear layers, $\textsc{Forget}: (x, c_i) \mapsto f_i$ and $\textsc{Input}: (x, c_i) \mapsto I_i$. The recurrent block uses these layers in the following computation:
\begin{align}
    f_i &= \sigma( \textsc{Forget}_\theta([x, \tanh(c_i)]) ) \\
    I_i &= \tanh( \textsc{Input}_\theta([x, \tanh(c_i)]) )\\
    c_{i+1} &= f_i \odot c_i + (1 - f_i) \odot I_i.
\end{align}
Because the new cell state is calculated as an interpolation between the previous cell state and the latest local computation, we call this layer an interpolation recurrent unit~(IRU).

\paragraph{The Complete Architecture.}

\begin{figure}[ht]
    \centering
    \includegraphics[width=0.5\linewidth]{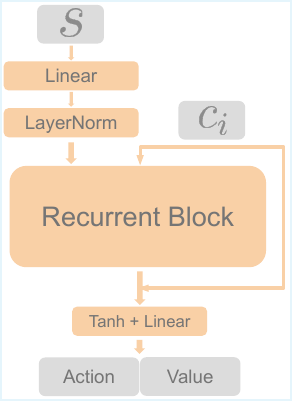}
    \caption{\textbf{Complete recurrent architecture.} This figure demonstrates the architecture for training policies/value functions. The recurrent block we use is an IRU.}
    \label{fig:arch}
\end{figure}
In our experiments, we will use this recurrent block as the core building block for our policy and value networks. These networks will take as input the observation and (for Q-functions) the action. After an initial linear layer and layer normalization layer~\cite{ba2016layernormalization}, we apply the recurrent block layer $N$ times. After each application of the recurrent block, the previous cell state is added to the output using a skip connection. The final cell state $c_{N}$, after a Tanh activation, is passed through a final linear layer to predict the actions and values. The entire architecture is referred to as IRU-$(\mathrm{N})$. See Fig.~\ref{fig:arch} for a visualization.

\begin{figure*}[t]
    \centering
    \includegraphics[width=\textwidth]{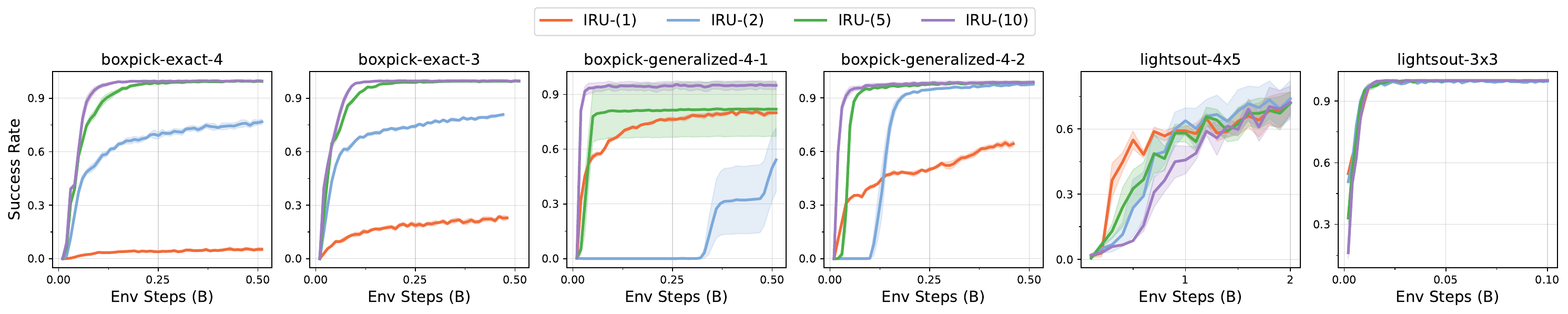}
    \caption{\textbf{Scaling recurrent steps in discrete environments.} Performance in most tasks improves as the number of recurrent steps increases, with performance often peaking at five recurrent steps.}
    \label{fig:discrete-train}
\end{figure*}

\begin{figure*}[!t]
    \centering
    \includegraphics[width=\textwidth]{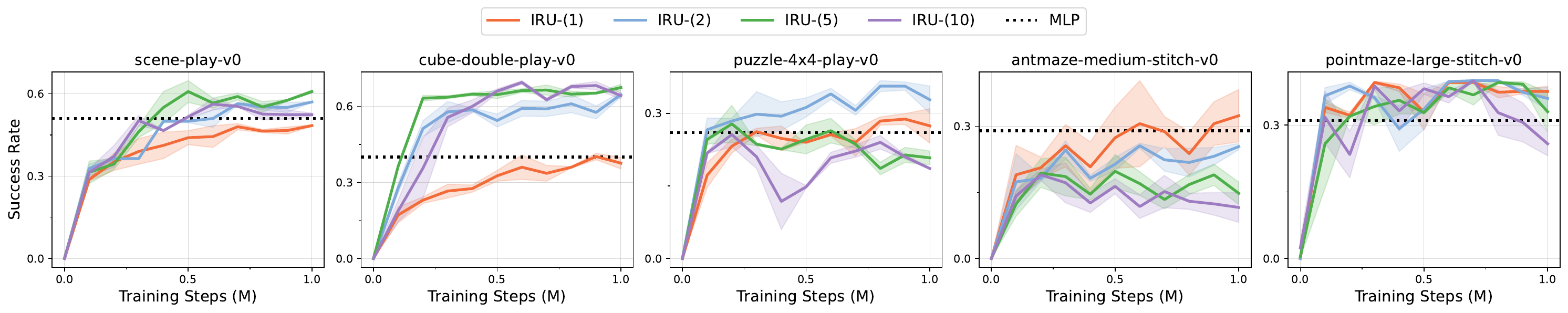}
    \caption{\textbf{Scaling up recurrent steps in continuous environments improves performance in OGBench tasks~\citep{ogbench_park2025}.} Interestingly, additional recurrent steps considerably improve performance mainly in tasks that involve long-horizon reasoning (scene, cube, and puzzle), while performance in stitching navigation tasks increases marginally with more steps.}
    \label{fig:ogbench}
\end{figure*}

\begin{figure*}[!t]
    \centering
    \includegraphics[width=\textwidth]{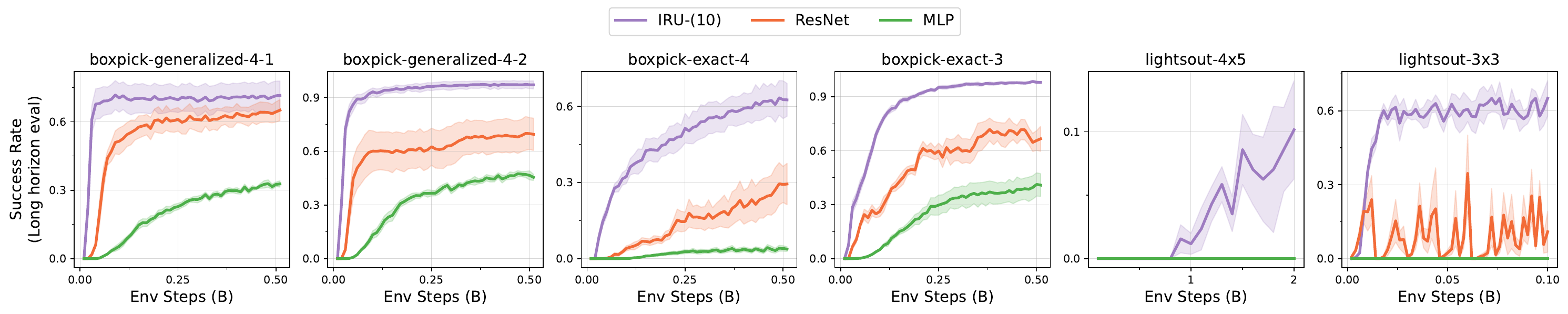}
    \caption{\textbf{Do recurrent steps improve generalization?} Throughout training, we track how policies learned by the IRU architecture and baselines perform on unseen tasks, including those that require more steps to solve. The IRU architecture learns faster~(on $5/5$ tasks) and converges to higher asymptotic performance~(on $4/5$ tasks) than MLPs and deep ResNets. The gains from IRU are most pronounced on \texttt{lightsout-4x4}, where all other architectures achieve only trivial performance.}
    \label{fig:discrete-test}
\end{figure*}

\section{Experiments}
\label{sec:experiments}
The main goal of our experiments is to complement our theory and understand the impact of using additional recurrent steps on policy performance~(\Cref{thm:policy-hierarchy}) and generalization on longer-horizon tasks~(\Cref{thm:long-horizon}). We evaluate the IRU architecture presented in~\Cref{sec:method} in both continuous and discrete settings. We describe the tasks and baselines in~\Cref{sec:tasks} and~\Cref{sec:baselines}, and answer the following questions in the subsequent sections:
\begin{itemize}  \setlength\itemsep{0em}
    \item Do more recurrent steps lead to better-performing policies? (\Cref{sec:performance})
    \item Does the recurrent architecture enable horizon generalization? (\Cref{sec:horizon_gen})
    \item How to measure the value of additional compute? (\Cref{sec:voc})
    \item Why does more compute improve performance? (\Cref{sec:compute_perf})
    \item How do different recurrent blocks perform with additional compute?
    (\Cref{sec:ablations})
\end{itemize}

\subsection{Tasks}
\label{sec:tasks}
To evaluate the proposed architecture across a wide range of tasks — including discrete and continuous — we conduct experiments in the following domains: boxpick stitching benchmark~\citep{bortkiewicz2025temporaldifferencelearninggold}, lightsout puzzle~\citep{anderson1998turning} and OGBench~\citep{ogbench_park2025}.

\paragraph{Boxpick.} The Boxpick benchmark consists of a discrete task on an $n \times n$ grid, where an agent navigates, picks up boxes, and places them on target locations. A rollout is successful if all boxes are placed on arbitrary targets, similar to Sokoban.
We use two settings: \emph{generalized stitching} (boxpick-gen-$\mathrm{m{-}n}$) and \emph{exact stitching} (boxpick-exact-$\mathrm{m}$). In boxpick-gen-$\mathrm{m{-}n}$, training tasks are sampled where only $\mathrm{n}$ boxes are not on target. During evaluation, longer-horizon tasks are sampled such that all the boxes are off the target. In the boxpick-exact-$\mathrm{m}$, the agent is trained to move boxes between any two neighboring quadrants of the board, while evaluation is performed with boxes and targets on diagonally opposite quadrants to evaluate test-time generalization to longer-horizon tasks. These tasks are very difficult for RL algorithms, and many state of the art algorithms only achieve trivial performance on them~\citep{bortkiewicz2025temporaldifferencelearninggold}.

\paragraph{Lightsout.} lightsout-$\mathrm{m{\times}n}$ is a combinatorial puzzle where the state space is a $\mathrm{m{\times}n}$ grid of black or white switches~($\{0,1\}^{mn}$). The action space contains $mn$ actions, each pushing a particular switch on the grid. Upon pushing, the colors of adjacent switches flip their color. At the start of every episode, a target grid configuration is sampled~($\{0,1\}^{mn}$). The agent receives a reward of -1 until it reaches this target. The space of possible inputs is very large~($2^{21}$ for lightsout-$\mathrm{4{\times}5}$), hence it is challenging to memorize all the optimal actions. During training, we sample shorter horizon tasks and during evaluation we sample longer-horizon tasks.

\paragraph{OGBench.} We use a total of five different environments consisting of a total of $25$ tasks from OGBench. All tasks are offline goal reaching RL tasks where agents are supposed to learn using a fixed dataset. The success rate of agents is plotted throughout training. Antmaze-medium-stitch and pointmaze-large-stitch tasks test trajectory stitching, while scene-play, cube-double-play and puzzle-4x4-play test long-horizon reasoning and combinatorial generalization abilities.

\subsection{Baselines}
\label{sec:baselines}

In all three settings, we use the same algorithm for all baselines with the standard hyper-parameters. The only change is the architecture used. We primarily compare with two baseline architectures -- a standard feedforward network~(\textbf{MLP}) and a residual network~(\textbf{ResNet}). The MLP baseline contains a similar number of parameters to the IRU in all experiments. The ResNet baseline has the same effective depth as the IRU, i.e., it uses a similar amount of compute. But the number of parameters in ResNet scales with its depth. For instance, in the boxpick tasks the IRU contains $500\mathrm{K}$ parameters, while the ResNet contains $3\mathrm{M}$ parameters. These baselines help distinguish the effects of amount of compute and the number of parameters used by RL agents. 

Results are reported using five random seeds for each experiment.
For additional details about the environments and the algorithmic implementation, please refer to~\Cref{app:implementation}.

\begin{table}[h]
    \centering
    \caption{\textbf{Sample efficiency and final performance of IRU vs. feed-forward architectures on the training tasks.} IRU achieves higher final performance across all tasks and better sample efficiency in most cases.}
    \label{tab:sample_efficiency}
    \resizebox{\columnwidth}{!}{%
    \begin{tabular}{l c c c c}
        \toprule
        \textbf{Environment} & \textbf{Env Steps} & \textbf{IRU-(10)} & \textbf{ResNet} & \textbf{MLP} \\
        \midrule
        
        \multirow{2}{*}{{boxpick-exact-4}} 
        & 10\% & $\mathbf{88.0 \pm 6.4}$ & $45.7 \pm 3.2$ &  $12.14 \pm 1.52$ \\
        & 100\% & $\mathbf{99.6 \pm 0.1}$ & $\mathbf{98.8 \pm 1.3}$ & $34.12 \pm 2.1$ \\
        \midrule

        \multirow{2}{*}{{boxpick-exact-3}} 
        & 10\% & $\mathbf{88.6 \pm 5.4}$ & $77.1 \pm 1.5$ &  $28.6 \pm 0.6$ \\
        & 100\% & $\mathbf{99.7 \pm 0.1}$ & $\mathbf{99.7 \pm 0.2}$ & $60.46 \pm 14.2$ \\
        \midrule

        \multirow{2}{*}{{boxpick-gen-4-1}} 
        & 10\% & $\mathbf{94.0 \pm 6.3}$ & $91.0 \pm 2.7$ & $48.6 \pm 0.7$ \\
        & 100\% & $\mathbf{95.0 \pm 5.7}$ & $90.6 \pm 4.3$ & $73.2 \pm 3.0$ \\
        \midrule

        \multirow{2}{*}{{boxpick-gen-4-2}} 
        & 10\% & $\mathbf{95.5 \pm 2.8}$ & $69.2 \pm 20.4$ & $21.6 \pm 1.7$ \\
        & 100\% & $\mathbf{98.9 \pm 1.9}$ & $82.5 \pm 13.5$ & $64.3 \pm 3.82$ \\
        \midrule
        
        \multirow{2}{*}{{lightsout-4x5}} 
        & 10\% & $\mathbf{30.9 \pm 9.8}$ & $0.5 \pm 0.7$ & $3.9 \pm 0.0$ \\
        & 100\% & $\mathbf{72.3 \pm 10.5}$ & $0.3 \pm 0.4$ & $1.6 \pm 0.0$ \\
        \midrule

        \multirow{2}{*}{{lightsout-3x3}} 
        & 10\% & $\mathbf{65.4 \pm 33.4}$ & $44.3 \pm 34.6$ & $0.3 \pm 0.4$ \\
        & 100\% & $\mathbf{100.0 \pm 0.0}$ & $84.2 \pm 17.4$ & $0.5 \pm 0.7$ \\
        
        \bottomrule
    \end{tabular}%
    }
\end{table}

\subsection{The effect of recurrent steps on policy performance}
\label{sec:performance}
In both the discrete tasks~(\Cref{fig:discrete-train}) and the continuous tasks~(\Cref{fig:ogbench}), we see that performance increases significantly with more recurrent steps. In the challenging boxpick tasks, or the manipulation tasks in ogbench, performance increases up to $10$ times after increasing the recurrent steps from one to ten. 

In~\Cref{tab:sample_efficiency}, we compare IRU-$(10)$, with the MLP and the ResNet architecture after 10\% and 100\% of the environment steps budget. The results in~\Cref{tab:sample_efficiency} show the success rate of all methods on training tasks. On all tasks, IRU-(10) outperforms the MLP, which uses less compute but similar number of parameters. Notably, IRU-(10) is able to solve the most challenging tasks like boxpick-exact-4, boxpick-gen-4-1 and lightsout-4x5, achieving a significant performance boost over the ResNet architecture. IRU achieves higher final performance and better sample efficiency across all tasks. This is despite the ResNet using $\approx\!2 \times$ more compute and $\approx\!5\times$ more parameters than IRU-(10). These results suggest that the IRU architecture has certain inductive biases which make it suitable for training on tasks with a combinatorial structure.

\subsection{Generalization to longer-horizon tasks}
\label{sec:horizon_gen}
We hypothesize that the solution learned by IRU policies will achieve better generalization to longer-horizon tasks during evaluation~(\Cref{thm:long-horizon}). 
We also hypothesize that the recurrent inductive bias of IRUs will help them generalize better than ResNets, on tasks where the true solution requires iteratively applying some fixed computation~(for e.g., lightsout). To empirically validate this hypothesis, we evaluate the performance of IRU, MLP and ResNet on a test distribution of unseen tasks that are strictly longer-horizon than the training tasks. 

\Cref{fig:discrete-test} shows that IRU outperforms MLP on all tasks and ResNet on ${4}/{6}$ tasks, and the IRU is more sample efficient on all tasks. Importantly, IRU training is more stable than ResNet, which exhibits higher variance across seeds.  

\begin{figure}[ht] 
    \centering
    \includegraphics[width=0.5\textwidth]{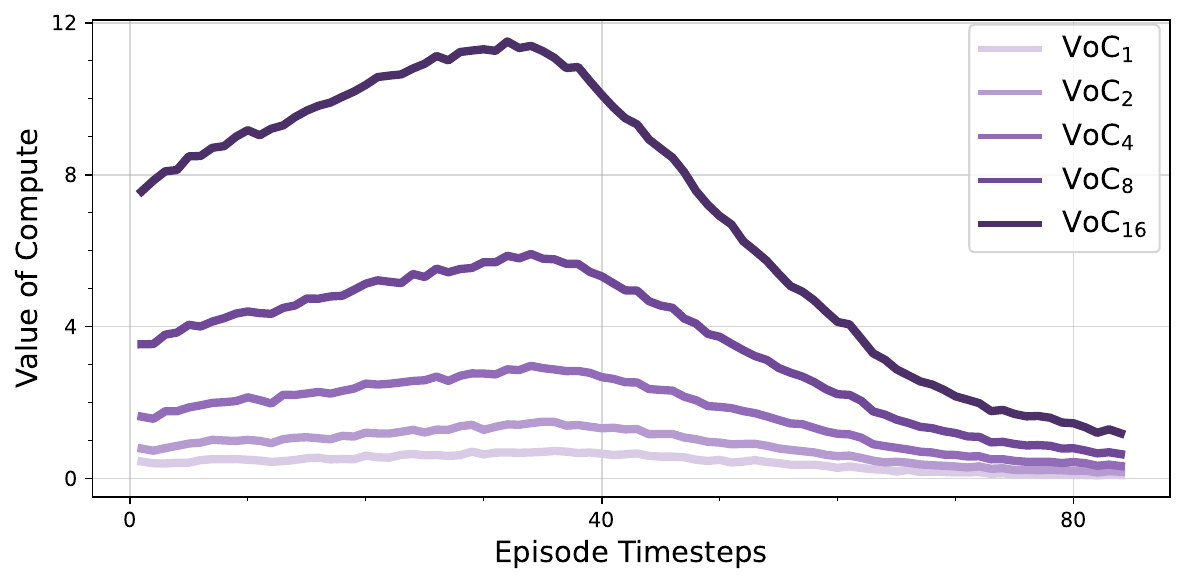}
    \caption{\textbf{The Value of Compute.} We plot the value of compute of IRU-$(1)$ over IRU-$(5)$ for different number of steps. We see that VoC increases with number of MDP time-steps using less compute. VoC also peaks at the early half of the episode where choosing the correct action is both difficult and crucial.}
    \label{fig:adv}
\end{figure}

\subsection{How to measure the value of compute?}
\label{sec:voc}
Our theory proves that there are certain tasks which only policies with more compute can solve~(\Cref{sec:theory}) and previous experiments~(\Cref{sec:performance} and \Cref{sec:horizon_gen}) show that policies with more recurrent steps perform better. We now provide a method to estimate the \textbf{value of compute} for any given problem. Given two policies $\pi_{t_1}$ and $\pi_{t_2}$ which use $t_1, t_2$ recurrent steps respectively, the $n$-step value of compute of $\pi_{t_2}$ over of $\pi_{t_1}$ is:\footnote{For simplicity we assume deterministic transitions and policies. An expectation needs to be added to the second and third term for the general case.}
\begin{align*}
\label{eq:voc}
    &\textsc{VoC}_n(s_0)\\ 
    &= v_{\pi_{t_1}}(s_0) - \big(\Sigma_{k=1}^{n} \gamma^{k-1}r(s_k, \pi_{t_2}(s_k)) + \gamma^{k}v_{\pi_{t_1}}(s_n) \big),
\end{align*}
where $v_{\pi_{t}}$ is the value function of $\pi_t$. Intuitively, the value of compute measures how much value is gained/lost by temporarily using $t_2$ recurrent steps~(instead of $t_1$) for $n$ time-steps in the MDP.

In~\Cref{fig:adv}, we plot the value of compute for the boxpick-exact-4 with $t_1=5$ (IRU-$(5)$) and $t_2=1$ (IRU-$(1)$) over the course of several successful trajectories.
We observe that value of compute increases with $n$, in line with the intuition that an agent will lose more value if it continues to use less compute for longer. 
For all values of $n$, the value of compute increases and peaks in the early half of the episode and then decreases towards zero. This is also expected since during the early half of an episode, selecting the correct action is both difficult and crucial. If the agent behaves sub-optimally, due to less compute, for too long, it might enter a region of the state space where it no longer knows how to recover. Towards the end of the episode when most blocks are already in their place, the value of compute is low as it is easier to select the correct action. This experiment highlights an opportunity for future work: compute-optimal agents should adaptively decide how much compute to use based on the difficulty of the current state.

\subsection{Why does more compute improve performance?}
\label{sec:compute_perf}
\begin{figure}[ht] 
    \centering
    \includegraphics[width=0.5\textwidth]{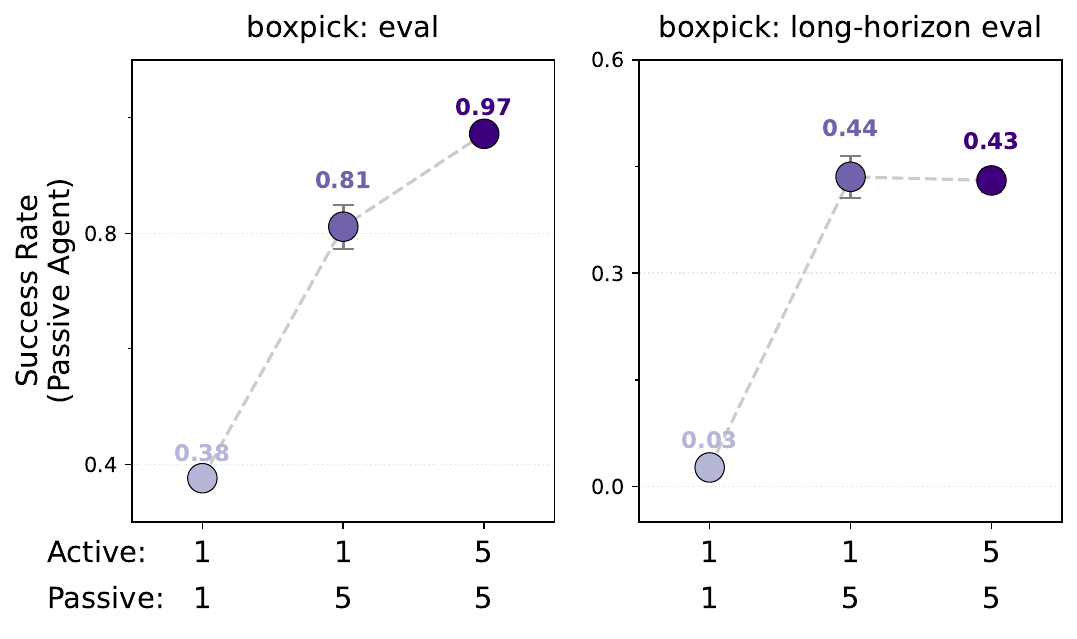}
    \caption{\textbf{Yoked experiments~\citep{ostrovski2021difficultypassivelearningdeep} to understand the performance benefits of using more compute.} These results show performance of a passive agent that is trained using the data collected by an active agent. We see that the highest jump in performance comes when the passive agent uses five recurrent step even though the data collecting agent only uses one.}
    \label{fig:active-passive}
\end{figure}
While experiments show that using more compute markedly improves policy performance and the theory provides expressivity results, it remains unclear whether this improvement stems from stronger exploration (i.e., better data) or from applying a higher capacity model to those data.
It is important to dissect where the primary benefits of additional compute come from. We now propose an experiment (similar to~\citep{ostrovski2021difficultypassivelearningdeep}) which allows us to do so.

Along the standard RL training loop, we train a passive agent on the data collected by original active agent. Both agents use the same objective and the architecture (but different weights), but the amount of recurrent steps they use are different. We perform this experiment on the boxpick-exact-4 and boxpick-gen-4-1 tasks and report the average success rates of the passive agent on training and testing distribution in~\Cref{fig:active-passive}. Note that all reported algorithms like DQN~\citep{mnih2013playingatarideepreinforcement}, CRL~\citep{eysenbach2023contrastivelearninggoalconditionedreinforcement}, and IQL~\citep{kostrikov2021offlinereinforcementlearningimplicit} achieved zero success rates with standard architectures in the long horizon evaluation of the boxpick-exact-4 task~\citep{bortkiewicz2025temporaldifferencelearninggold}.

In~\Cref{fig:active-passive}, we see that the highest jump in performance comes when the passive agent uses $5$ recurrent steps. Remarkably, the data required to train the high-performing agent with $5$ recurrent steps are almost all contained in the data collected by an agent that used just $1$ recurrent step, even though that agent itself received low returns.
We also run experiments where the active agent uses five recurrent steps and the passive agent uses one. We observe that the performance of this passive agent is worse than the performance of IRU-$(1)$. This could be because the off-policy bias of training passive agents or the inability of policies with less compute to learn good policies despite high quality data. Increasing the compute used by the active agent improves the performance marginally on the training tasks, but slightly reduces the performance on evaluation tasks.
This suggests that most of the benefits of more compute comes from the additional expressivity of the agent. While we expect additional expressivity to indirectly improve exploration, using additional compute to explicitly improve exploration is an interesting direction for future work.

\subsection{Ablation studies for architectural choices.}
\label{sec:ablations}
We perform an ablation experiment to compare the results when substituting three other choices of recurrent blocks in~\Cref{fig:arch}, instead of the IRU. We use an LSTM network~\citep{lstm}, feedforward residual network~\citep{he2015deepresiduallearningimage}, and the IRU with the deep recursion technique from~\citet{trms}. For deep recursion, the network maintains two latent variables and performs two hierarchies of recurrence~(see Figure 3 of \citet{trms} for more details). All the networks use the same number of recurrent steps. \Cref{fig:ablations} shows that the LSTM marginally outperforms the IRU. We prefer the IRU because it is twice as fast as an LSTM and contains half as many parameters at the same depth and hidden state dimensions. We see that the recurrent ResNet blocks do not achieve the same benefits of added compute. This suggests that a suitable architecture~(some gating mechanisms in our case) are necessary to achieve the scaling benefits of additional compute. We also observe that the deep recursion technique works relatively worse. We believe that there is significant room for improving architectures as well as techniques that efficiently leverage the benefits that additional compute can provide in RL.

\begin{figure}[ht]
    \hspace{-1.5em}
    \centering
    \includegraphics[width=0.5\textwidth]{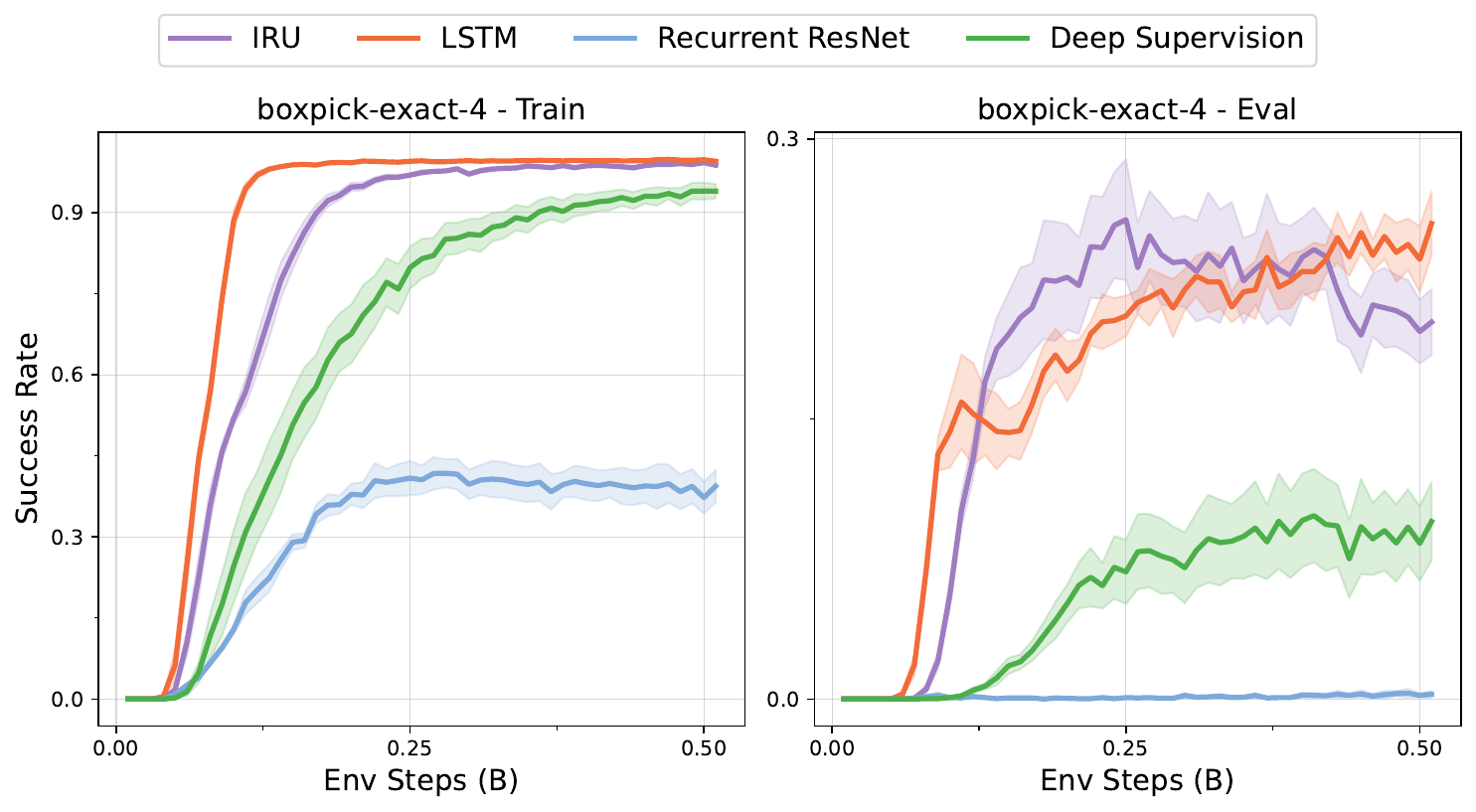}
    \caption{\textbf{Ablation experiments comparing several recurrent blocks in the complete architecture~(\Cref{fig:arch}).}}
    \label{fig:ablations}
\end{figure}

\section{Conclusion}
In this work, we challenge the standard view of RL policies as static functions that map states to actions. Instead we advocate for a computational view of RL policies where computation time can be independent from the number of policy parameters. We formulate RL policies as bounded models of computation and prove that policies which use more compute can perform and generalize to longer-horizon evaluation tasks better than policies which use less compute. For our experimental results, we propose a minimal architecture that is trainable at different levels of compute and show that using more compute can lead to significant gains in performance and longer-horizon generalization which was elusive for strong prior algorithms with standard architectures~\citep{bortkiewicz2025temporaldifferencelearninggold}. Finally, we also provide a computational method to estimate the value of compute in RL, akin to the value of information in partially observed tasks. These results help us measure, even in fully observed tasks, the value that additional compute provides.

While prior work has characterized the computational complexity of common neural network architectures~\citep{cc_nets, Strobl_2024}, the implications of such computational constraints on reinforcement learning remain underexplored. For instance, common knowledge that all MDPs have a deterministic policy solution is no longer true when using computationally bounded policies. In such cases, questions of how to tradeoff compute and value and how to leverage memory become exciting directions for future work.

One limitation of our work is that we use a fixed amount of recurrent computation steps for all states. We do not demonstrate how the same policy can use an adaptive amount of compute depending on the difficulty of the current state. Future work could explore such methods which might automatically learn inference-time compute scaling strategies or pre-fetch anticipated computation to store in memory. Additionally, our empirical evaluation focuses on a minimal recurrent architecture to isolate the effects of computation. We do not explore transformer-based architectures, and given their ubiquity in machine learning, an exploration of transformer-based recursive architectures in RL remains an interesting direction for future research. Lastly, our theoretical results use a single tape Turing machine as a computational model and only focus on time-complexity. Similar interesting results could be proven for space complexity or by using computational models like boolean circuits which better resemble modern neural networks.\vspace{3em}
\clearpage

\paragraph{Acknowledgements.}
The authors are pleased to acknowledge that the work reported on in this paper was substantially performed using the Princeton Research Computing resources at Princeton University which is consortium of groups led by the Princeton Institute for Computational Science and Engineering (PICSciE) and Office of Information Technology's Research Computing. This research was also supported by grants from NVIDIA and utilized NVIDIA A100-SXM4-80GB GPUs, 
National Science Centre, Poland (grant no. 2023/51/D/ST6/01609), and Polish high-performance computing infrastructure PLGrid (HPC Center: ACK Cyfronet AGH) grant no. PLG/2025/018637. 
We gratefully acknowledge Polish high-performance computing infrastructure PLGrid (HPC Center: ACK Cyfronet AGH) for providing computer facilities and support within computational grant no. PLG/2025/018581.
AZ is partially funded by Google PhD Fellowship. The authors thank Brenden Lake, Mahsa Bastankhah and Abhishek Panigrahi for helpful discussions.

\section*{Impact Statement}
This paper presents work whose goal is to advance the field of machine learning. There are many potential societal consequences of our work, none of which we feel must be specifically highlighted here

\clearpage
\appendix
\onecolumn

\section{Proofs}
\label{app:proofs}

\begin{theorem}[Policy Hierarchy Theorem]
\label{thm:policy-hierarchy-proof}
Let $g(n)$ and $t(n)$ be time-constructible functions such that $ g(n) \in o(t(n) / \log t(n))$. Then there exists an MDP and a function $f(n) \in \gO(t(n))$ such that:
\begin{enumerate}
    \item The optimal policy $\pi^*$ belongs to the class $\Pi_{f}$. 
    \item For any policy $\pi \in \Pi_{g}$, $J(\pi)$ is arbitrarily lower than $J(\pi^*)$.
\end{enumerate}
\end{theorem}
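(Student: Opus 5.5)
Our approach is to reduce the statement to the deterministic time hierarchy theorem for single-tape machines. The idea is to encode a language that is hard for $g$-bounded machines into an MDP, so that acting optimally amounts to deciding that language.

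\textbf{Step 1: pick the hard language.} By the time hierarchy theorem (in the form of \citet{sipser}), since $t$ is time-constructible and $g(n)\in o(t(n)/\log t(n))$, there is a language $L\subseteq\{0,1\}^*$ with two properties. First, $L$ is decided by a single-tape machine $M_L$ in time $f(n)\in\gO(t(n))$. Second, no machine running in time $\gO(g(n))$ decides $L$. We will use the diagonalization structure of that proof, not merely the statement. $L$ is built by simulating each machine $\langle M\rangle$ on inputs of the form $\langle M\rangle 10^k$ and flipping the answer. Consequently, for every fixed $g$-bounded machine $M$ there are infinitely many inputs (all padded versions $\langle M\rangle 10^k$ with $k$ large) on which $M$ disagrees with $L$. \Cref{ass:bounded-policy} is what keeps this uniform. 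The policy class $\Pi_g$ consists of machines with description length below $K_{max}$, so there are only finitely many of them. Each one is among the machines diagonalized against, and since $K_{max}$ exceeds the size of the universal simulator, the simulation overhead stays within $f$.

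\textbf{Step 2: build the MDP.} The states are binary strings $s\in\{0,1\}^*$, and the actions are $\{0,1\}$. The initial distribution $p(s_0)$ places positive mass on every string; for instance, sample the length $n$ geometrically and then the string uniformly. Each episode lasts one step, or equivalently every action leads to an absorbing zero-reward state. The reward is $r(s,a)=R\cdot\1[a=L(s)]$. The optimal policy is then $\pi^*(s)=L(s)$, which is computed by $M_L$. So $\pi^*\in\Pi_f$, which gives item 1. Any policy $\pi$ satisfies $J(\pi^*)-J(\pi)=R\cdot \Pr_{s\sim p}[\pi(s)\neq L(s)]$.

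\textbf{Step 3: make the gap arbitrary.} Fix $\pi\in\Pi_g$ with machine $M$. By Step 1, $M$ errs on some input $s_M$ of $L$. By Step 2, $s_M$ has positive probability $p(s_M)$, so the gap is at least $R\,p(s_M)$. By \Cref{ass:bounded-policy}, $\Pi_g$ is finite, so $\delta:=\min_{\pi\in\Pi_g}p(s_{M_\pi})>0$. We then choose the reward scale $R$ large enough that $R\delta$ exceeds any prescribed constant. We could instead concentrate $p(s_0)$ uniformly on the finite set of witness inputs, which gives a gap of at least $R/|\Pi_g|$. Either way, every $\pi\in\Pi_g$ satisfies $J(\pi)\le J(\pi^*)-C$ for any chosen $C$, which gives item 2. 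If "arbitrarily lower" is meant in ratio form, we can use rewards in $\{0,1\}$ and a start distribution concentrated on the witnesses. This makes $J(\pi)\le (1-1/|\Pi_g|)J(\pi^*)$, and repeated independent rounds with multiplicative success push the ratio toward $0$.

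\textbf{Main obstacle.} The delicate step is Step 1: guaranteeing that \emph{every} $g$-bounded policy errs somewhere, uniformly over the class. The bare hierarchy theorem only says $L\notin\mathrm{TIME}(g)$. That is enough for each individual machine to err on at least one input, since the machine is $g$-bounded and decides a language different from $L$. It is therefore actually sufficient, and the finiteness from \Cref{ass:bounded-policy} then gives the uniform constant $\delta$. We must also check that the single-tape hierarchy theorem yields $f\in\gO(t)$ under the stated $o(t/\log t)$ gap. The factor $\log t$ covers the cost of the step counter in the diagonalizer, which is where time-constructibility of $t$ enters.
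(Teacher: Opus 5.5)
Your argument is correct, but it uses the time hierarchy theorem as a black box, whereas the paper reopens its proof.

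\textbf{The paper's route.} The paper uses \Cref{ass:bounded-policy} to bound the universal simulator's per-step overhead by $c(K_{max})$ for every machine in $\Pi_g$ simultaneously. This gives a single padding threshold $n_{max}$ beyond which every $g$-bounded policy disagrees with the diagonalizer $D$ on all of its own padded descriptions $\langle M_\pi\rangle 10^{l}$. The MDP's states are exactly such strings: $\langle M\rangle$ is drawn uniformly from strings of length at most $K_{max}$, and the padding is longer than $n_{max}$. Hence every $\pi\in\Pi_g$ errs with probability at least $1/(2^{K_{max}+1}-2)$, and the gap $R/(2^{K_{max}+1}-2)$ is explicit.

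\textbf{Your route.} You use only $L\notin\mathrm{TIME}(o(t/\log t))$ to get one witness input per machine. You use \Cref{ass:bounded-policy} differently, to make $\Pi_g$ finite, so that under a full-support start distribution $\delta=\min_{\pi}p(s_{M_\pi})>0$.

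\textbf{Comparison.} Your argument is more elementary and modular. Any language separating the two time classes works, and you need neither the padded-description structure nor a uniform threshold. The paper's argument gives a constant that does not depend on unknown witnesses. Its threshold $n_{max}$ is also reused in the Long Horizon Generalization proof, where training lengths are kept below $n_{max}$ and failures are forced above it.

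\textbf{Two small corrections.}
\begin{enumerate}
\item What the lower bound on $K_{max}$ actually secures is $|\langle M_L\rangle|<K_{max}$, so that $\pi^*$ is admissible in $\Pi_f$; you need this for item 1. It is not what keeps the simulation overhead in check, as your Step 1 says. In the paper, that control comes from the \emph{simulated} machines having size below $K_{max}$.
\item The ``independent rounds'' aside for the ratio form is not justified as written, because independent rounds require stochastic transitions and the paper's theory assumes deterministic MDPs. A deterministic chain that visits each of your finitely many witnesses once would do the job. In any case the aside is unnecessary, since the paper's ``arbitrarily lower'' is the additive gap obtained by scaling $R$.
\end{enumerate}
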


\begin{proof}

Let $U$ be a universal Turing machine that given an input of the form $(\langle M \rangle 10^*)$, simulates the computation of $M(\langle M \rangle 10^*)$ such that each step of the simulating $M$ takes up some time $c(|\langle M \rangle|)$, depending only on the size of $M$ and not on its input\footnote{This does not mean that $U$ runs in constant time; the total deciding time still depends on the length of $w$.}. See Theorem 9.10 of~\cite{sipser} for the description of this Turing machine $U$. 

We will briefly re-describe how $U$ works. $U$ has two tracks (odd positions and even positions) in its single working tape. One of the tracks contains the information on $M$'s tape, and the second contains $M$'s current state and a copy of $M$'s transition function. During the simulation, $U$ always keeps the information on the second track near the current position of $M$'s head on the first track. Every time $M$’s head moves, $U$ shifts all the information on the second track near the new head position. Because the information on the track depends only on $|M|$, shifting it and reading $M$'s next action for every simulation step of $M$ takes at most a constant amount of time depending on $|M|$.

We now define a language $L$ that it is decidable in $\gO(t(n))$, but cannot be decided in $o(t(n) / \log t(n))$.

\begin{equation}
\begin{aligned}
L = \{ &(\langle M \rangle 10^*) : U \text{ rejects } (\langle M \rangle 10^*) \\
       &\text{in } \leq \frac{t(|\langle M \rangle 10^*|)}{\log t(|\langle M \rangle 10^*|)} \text{ steps} \}
\end{aligned}
\end{equation}

The proof for this can be found in Theorem 9.10 of~\citet{sipser}~(see Theorem 3.1 of \citet{arora2006computational} for the proof of a simpler case). The proof constructs a Turing machine $D$ that decides $L$ in $f(n) \in \gO(t(n))$. The proof then shows that for any $g(n) \in o(t(n) / \log t(n))$ bounded Turing machine $M$, there exists a constant $n_0 \in \mathbb{N}$ such that on all inputs of the form $\{ (\langle M \rangle 10^*) \in \{0,1\}^{n>n_0} \}$  it will output the opposite of $D$. That is if $D$ accepts, $M$ will reject and vice versa. Hence proving that any $g \in o(t(n) / \log t(n))$-bounded Turing machine cannot decide~$L$. In ~\Cref{ass:bounded-policy}, we need $K_{max}$ to be larger than $|D|$. This is not a stringent requirement as D follows a relatively simple algorithm whose pseudo-code can be described in seven english statements in Theorem 9.10 of~\cite{sipser}.

Even though Theorem 9.10 of~\cite{sipser} shows that the constant $n_0$ exists for any $g$-bounded Turing machine, it can be arbitrarily large. We will use~\Cref{ass:bounded-policy} to bound this constant. The Turing machine $U$ computes $M(\langle M \rangle 10^*)$ on inputs of the form $(\langle M \rangle 10^*)$. We know that each step of this simulation can be done in constant time  $c(|\langle M \rangle|)$. If $M$ is a $g(n)$-bounded Turing machine then $U$ will simulate it in at most $c(|\langle M \rangle|) g(|(\langle M \rangle 10^*)|)$ steps. Because of~\Cref{ass:bounded-policy}, the maximum time that $U$ will take to simulate any $g$-bounded Turing machine will be $c( K_{max} ) g(|(\langle M \rangle 10^*)|)$. Since $g(n)\in o(t(n) / \log t(n))$, there exists an $n_{max}$ such that for all $n\geq n_{max}$, $g(n) c( K_{max} ) < t(n) / \log t(n)$. Hence, the running time of any $g$-bounded Turing machine on inputs of the form $(\langle M \rangle10^*)$ with length greater than $n_{max}$ is going to be less than $\frac{t(|\langle M \rangle 10^*|)}{\log t(|\langle M \rangle 10^*|)}$. Such strings will be in $L$ if $M$ rejects and won't be in $L$ if $M$ accepts. Because $D$ decides $L$, any $g$-bounded Turing machine $M$ will output opposite to $D$ on inputs of the form $(\langle M \rangle10^*)$ with length greater than $n_{max}$. 

Using this language, we will define an MDP which satisfies both conditions of~\Cref{thm:policy-hierarchy-proof}. Each episode in the MDP will end in a single time-step. The state space of this MDP is the following:
\begin{equation}
    \gS = \{(\langle M \rangle 10^*) : |\langle M \rangle| < K_{max} \}
\end{equation}
The action space is binary and the reward function is:
\begin{equation}
    R(s, a) = \begin{cases} 0 & \text{if } a = D(s) \\ -R & \text{otherwise}
\end{cases}
\end{equation}

The start state distribution $p(s_0)$ is:
\begin{equation}
    \langle M \rangle \sim \gU\big(\{0,1\}^{\leq K_{max}}\big) \; \text{and} \; 10^q \sim p(10^{l>n_{max}}),
\end{equation}
where $p$ can be any distribution.

Condition 1 of~\Cref{thm:policy-hierarchy-proof} is true because we know that $D$ is a $f \in \gO(t(n))$-time Turing machine. Hence, $\pi^*$ will just be the policy that is simulated by $D$. 

Condition 2 of~\Cref{thm:policy-hierarchy-proof} is true because for any $\pi \in \Pi_{g}$, there are infinitely many states of the form $s = (\langle M_{\pi} \rangle10^{l>n_{max}})$, where $\pi$ will get arbitrarily low reward ($-R$). Because such states occur with probability $\frac{1}{2^{K_{max}+1} - 2}$, the maximum returns that any $g$-bounded policy can get is $\frac{-R}{2^{K_{max}+1} - 2}$. This value can be reduced arbitrarily by reducing the value of $-R$.
\end{proof}

\begin{theorem}[Long Horizon Generalization]
\label{thm:long-horizon-proof}
Let $g(n)$ and $t(n)$ be time-constructible functions such that $ g(n) \in o(t(n) / \log t(n))$. Then there exists a goal conditioned MDP and a function $f(n) \in \gO(t(n))$ such that:
\begin{enumerate}
    \item The optimal goal conditioned policy $\pi^*$ belongs to the class $\Pi_{f}$.
    \item For all training tasks $(s_{\text{train}},g_{\text{train}} \sim p_{\text{train}}(s, g))$,  the best $g$-bounded policy ${\pi}^*_g \in \Pi_{g}$ has the same value function as $\pi^*$. But for infinitely many longer-horizon test tasks, ${\pi}^*_g$ is arbitrarily worse than $\pi^*$.
\end{enumerate}
\end{theorem}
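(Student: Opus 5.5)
We reuse the diagonal language $L$, the deciding machine $D$ (running in time $f(n)\in\gO(t(n))$), and the threshold $n_{max}$ from the proof of \Cref{thm:policy-hierarchy-proof}. Recall the key fact established there under \Cref{ass:bounded-policy}: every $g$-bounded machine $M$ with $|\langle M\rangle|<K_{max}$ outputs the opposite of $D$ on every input $\langle M\rangle 10^{l}$ with $|\langle M\rangle 10^l|>n_{max}$. Below that threshold we have no control over whether $M$ agrees with $D$.

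\textbf{Construction.} The plan is to encode the ``length'' $l$ of the padding as the horizon of the task. We build a deterministic goal-conditioned MDP with binary actions. The goal is a fixed absorbing state $g^\star$, with reward $0$ there and $-R$ elsewhere. Each task starts at a state $s_0=(\langle M\rangle 10^l)$, which sits in a chain of states $(\langle M\rangle 10^l, j)$ for $j=0,\dots,l$; these chain states are encoded so that $\pi^*$ can still read $\langle M\rangle 10^l$.

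\begin{itemize}
\item The action $a=D(\langle M\rangle 10^l)$ advances the chain from $j$ to $j+1$, and from $j=l$ it moves to $g^\star$.
\item Any other action moves to an absorbing non-goal sink that pays $-R$ forever.
\end{itemize}

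With this design the horizon of task $(\langle M\rangle 10^l, g^\star)$ is $l+1$, so longer padding means a strictly longer-horizon task. A simpler alternative, if the chain encoding proves awkward, is to make only the first action decisive and then follow a deterministic path of length $l$ to the goal. Either way, the horizon grows with $l$ and everything hinges on one decision whose correct answer is $D(s_0)$.

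\textbf{Optimal policy and the two task distributions.} The optimal policy $\pi^*$ outputs $D$ applied to the encoded string. By decoding the string and running $D$, it lies in $\Pi_{f'}$ for some $f'\in\gO(t(n))$; the decoding overhead should be linear and so is absorbed, which gives condition 1. We take $p_{\text{train}}$ to be supported on strings of length at most $n_{max}$, and test tasks to be those of length greater than $n_{max}$, which are strictly longer horizon.

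\textbf{Condition 2.} Only finitely many training states exist, namely strings of length at most $n_{max}$ with $|\langle M\rangle|<K_{max}$. A $g$-bounded machine can therefore hardcode the value of $D$ on all of them as a lookup table, and do anything on longer inputs. The best $g$-bounded policy $\pi^*_g$, chosen to maximize training return, thus matches $\pi^*$ everywhere on training tasks and has the same value function there.

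For test tasks, let $M_g$ be the machine of $\pi^*_g$. By the diagonal fact, on the infinitely many test states $\langle M_g\rangle 10^{l}$ with length greater than $n_{max}$, $\pi^*_g$ takes the wrong action and falls into the sink. Its value on these tasks is about $-R/(1-\gamma)$, or $-RH$ with a finite horizon, while $\pi^*$ has value at most roughly $-R(l+1)$, which is bounded for each fixed task. Making $R$ large, or comparing against the sink's lower reward, makes the gap arbitrarily large.

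\textbf{Main obstacle.} Condition 2 requires the lookup table to fit in the description bound, i.e.\ $|\langle M_g\rangle|<K_{max}$. Yet $n_{max}$ was defined in terms of $K_{max}$, so there is a circularity. I would break it by noting that the diagonal argument only needs $|\langle M_g\rangle|$ to be bounded. We take a second constant $K'>K_{max}$ large enough to hold the table, and redefine $n'_{max}$ using $c(K')$; the training set stays the one defined from $K_{max}$. If that circularity bites, the fallback is to state the claim for the best policy in $\Pi_g$ restricted to description length $K'$.

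A second point to check is that the table-lookup machine actually runs within $g(n)$ steps on short inputs. This holds since $g(n)\ge n$ eventually, and only finitely many inputs need adjustment.
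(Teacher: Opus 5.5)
Your proposal is correct and follows essentially the paper's route. You reuse $L$, $D$ and $n_{max}$, build a deterministic MDP whose horizon grows with the padding length and where answering anything other than $D(s_0)$ at the start state is fatal, hardcode the finite training set into a constant-time (hence $g$-bounded) lookup policy, and apply the diagonal fact to $\langle M_{\pi^*_g}\rangle 10^l$ with length above $n_{max}$. The paper's MDP differs only cosmetically: each correct action deletes the last bit, so every step is a fresh query of $D$ on a shorter valid state, a wrong action terminates with return $0$, and finishing pays $+R$.

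The description-length circularity you flag is genuine, and the paper's proof passes over it. You do not need to enlarge the bound to $K'$, though: nothing forces the training tasks to reach length $n_{max}$ (the paper only takes lengths below some $K_{train}<n_{max}$). So choose the training set small enough that its lookup machine fits under $K_{max}$, and keep the test tasks beyond $n_{max}$.
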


\begin{proof}
We will use a lot of the components built in the policy hierarchy theorem~(\Cref{thm:policy-hierarchy-proof}) in this proof.

We first define the MDP. The state and goal space of the MDP are:
\begin{align*}
    \gS &= \{(\langle M \rangle 10^*) : |\langle M \rangle| < K_{max} \},\\
    \gG &= \{10^*\} \cup \{\epsilon\}.
\end{align*}
The action space is binary and the reward is:
\begin{equation*}
        R_g(s, a) = \begin{cases} +R & \text{if } s \; \text{is of the form} \; (\langle M \rangle, g) \\ 0 & \text{otherwise}.
\end{cases}
\end{equation*}
The transition dynamics of the MDP are the following:
\begin{equation*}
        T((\langle M \rangle, w), a) = \begin{cases} (\langle M \rangle, w_{1,\dots, |w|-1}) & \text{if } a=D(s) \text{ and } w\ne \epsilon \\ \text{Terminal State} & \text{otherwise}.
\end{cases}
\end{equation*}
Hence, at every step, if the policy outputs the correct action, the last bit of the current state string $w$ is removed. If the policy outputs the wrong answer, then the episode will end with a return of 0.

The training task distribution is a uniform distribution over the following set:
\begin{equation*}
    \{(\langle M \rangle 10^*) : |(\langle M \rangle 10^*)| < K_{\text{train}} < n_{max}\} \times \{\epsilon\}.
\end{equation*}
So the agent has to provide the correct answer at all time steps, till the string $w$ is reduced to an empty string. The agent will then receive a reward of +$R$. Note that the horizon of training tasks is always bounded by $K_{\text{train}}$.

Condition 1 of~\Cref{thm:long-horizon-proof} is true because, $\pi^*$ will just be the policy that is simulated by $D$ and we know that $D$ is a $f \in \gO(t(n))$-time Turing machine.

For Condition 2, we note that the set of training tasks $\gS_{train}$ is a finite set. Hence it is a regular language that can always be decidable in $\gO(1) \subset o(t(n) / \log t(n)).$ One can run the policy corresponding to this $g$-bounded Turing machine (since it is constant time, it is also $g$-bounded) and achieve the optimal returns on all training tasks. Let this policy be $\pi^*_g$ and let $M^*_g$ be the corresponding Turing machine. From~\Cref{thm:policy-hierarchy-proof}, we know that the there are infinitely many longer-horizon test tasks of the form $s_{test} = (\langle M_{\pi^*_g} \rangle 10^*), g_{test}=\epsilon$, with $|\langle M_{\pi^*_g} \rangle 10^*| > n_{max}$, where $\pi_g^*$ will take the suboptimal action and get zero returns.
\end{proof}

\section{Implementation details}
\label{app:implementation}

We present the implementation details of all the results presented in our paper. Unless stated otherwise, the algorithm specific hyper-parameters used for all architectures are the same as standard hyper-parameters and hence are not tuned to favor any architecture. Only the architecture relevant code is changed when comparing IRU, MLP or ResNets. The underlying algorithm for all the boxpick tasks is goal conditioned DQN~\citep{he2015deepresiduallearningimage}, for lightsout is PPO, and for OGBench tasks is goal conditioned IQL~\citep{kostrikov2021offlinereinforcementlearningimplicit}.

\paragraph{Boxpick.} We use the standard hyper-parameters used by the DQN algorithm implemented in~\citet{bortkiewicz2025temporaldifferencelearninggold} benchmark listed in \Cref{table:benchmark_params}. We use a single IRU layer to make the recurrent block. All linear layers have a dimension of $256$. The ResNets used in this task consist of two residual blocks, each with four hidden layers, following the architecture of~\citet{wang2025}. The dimensions of all layers in the residual blocks is 256. The MLPs used in this task contain 2 hidden linear layers of 256 dimensions.

\begin{table}[h]
	\centering
	\caption{DQN hyperparameters in Boxpick benchmark.}
	\begin{tabular}{cc}
		\toprule
		\textbf{Hyperparameter}                            & \textbf{Value}                 \\
		\midrule
		\verb|num env steps|                               & 500,000,000                    \\
        \verb|num updates|                                 & 1,000,000                      \\
		\verb|max replay size (per env instance)|          & 10,000                         \\
		\verb|min replay size|                             & 1,000                          \\
		\verb|episode length|
		                                                    & 100                           \\
		\verb|discount|                                    & 0.99       \\
		\verb|number of parallel envs|                           & 1024  \\
		\verb|batch size|                                  & 1024                            \\
		\verb|learning rate|                               & 3e-4 for IRU \& MLP and 1e-4 for ResNets (tuned individually)                          \\
        \verb|target_entropy|                              & 1.1                            \\
        \verb|layer_norm|                              &   \verb|True|                          \\
		\bottomrule
	\end{tabular}
	\label{table:benchmark_params}
\end{table}

The boxpick environment also has several parameters that can be configured. We have specified all the boxpick environment related parameters used in our experiments in~\Cref{table:boxpick_params}.

\begin{table}[t]
\centering
\caption{Environment configuration for boxpick.}
\begin{tabular}{ll}
\toprule
Env & Arguments \\
\midrule
\texttt{exact-4} & \makecell[l]{\texttt{number\_of\_boxes\_min=4} \\ \texttt{number\_of\_boxes\_max=4} \\ \texttt{number\_of\_moving\_boxes\_max=4} \\ \texttt{grid\_size=6} \\ \texttt{episode\_length=100} \\ 
\texttt{eval\_special} \\
\texttt{level\_generator=variable}} \\
\midrule
\texttt{exact-3} & \makecell[l]{\texttt{number\_of\_boxes\_min=3} \\ \texttt{number\_of\_boxes\_max=3} \\ \texttt{number\_of\_moving\_boxes\_max=3} \\ \texttt{grid\_size=6} \\ \texttt{episode\_length=100} \\ 
\texttt{eval\_special} \\
\texttt{level\_generator=variable}} \\
\midrule
\texttt{generalized-4-1} & \makecell[l]{\texttt{number\_of\_boxes\_min=4} \\ \texttt{number\_of\_boxes\_max=4} \\ \texttt{number\_of\_moving\_boxes\_max=1} \\ \texttt{grid\_size=6} \\ \texttt{episode\_length=100} \\ 
\texttt{eval\_different\_box\_numbers}} \\ 
\midrule
\texttt{generalized-4-2} & \makecell[l]{\texttt{number\_of\_boxes\_min=4} \\ \texttt{number\_of\_boxes\_max=4} \\ \texttt{number\_of\_moving\_boxes\_max=2} \\ \texttt{grid\_size=6} \\ \texttt{episode\_length=100} \\ 
\texttt{eval\_different\_box\_numbers}} \\ 
\bottomrule
\label{table:boxpick_params}
\end{tabular}
\end{table}

\paragraph{OGBench.} We use the standard hyper-parameters used by the IQL algorithm implemented in~\citet{ogbench_park2025} benchmark. We use a single IRU layer to make the recurrent block. All linear layers have a dimension of $64$. The baseline MLP results are directly taken from~\citet{ogbench_park2025}.

\paragraph{Lightsout.} We implement the PPO algorithm and use standard hyper-parameters listed in~\Cref{tab:ppo_hyperparams}. We use stack two IRU layers to make the recurrent block. All linear layers have a dimension of $64$. The ResNets used in this task contain $10$ residual blocks with two hidden layers each. The dimensions of all layers in the residuals blocks is $64$. The MLPs used in this task contain $4$ hidden linear layers of $256$ dimension.

\begin{table}[h]
    \centering
    \caption{PPO Hyperparameters in lightsout benchmark.}
    \label{tab:ppo_hyperparams}
    \begin{tabular}{lc}
        \toprule
        \textbf{Hyperparameter} & \textbf{Value} \\
        \midrule
        \verb|number of parallel envs| & $2048$ \\
        \verb|discount factor| & $0.99$ \\
        \verb|gae parameter| & $0.95$ \\
        \verb|clipping parameter| & $0.3$ \\
        \verb|entropy coefficient| & $0.01$ \\
        \verb|learning rate| & $1 \times 10^{-4}$ \\
        \verb|rollout length| & $160$ \\
        \verb|optimization epochs| & $8$ \\
        \verb|mini-batches per rollout| & $32$ \\
        \verb|advantage normalization| & True \\
        \bottomrule
    \end{tabular}
\end{table}

\end{document}